\documentclass{article} 
\usepackage{iclr2027_conference,times}
\iclrfinalcopy

\usepackage{amsmath,amsfonts,bm}

\def\eqref#1{equation~\ref{#1}}

\def\1{\bm{1}}

\DeclareMathAlphabet{\mathsfit}{\encodingdefault}{\sfdefault}{m}{sl}
\SetMathAlphabet{\mathsfit}{bold}{\encodingdefault}{\sfdefault}{bx}{n}

\usepackage{hyperref}
\usepackage{url}

\usepackage[utf8]{inputenc}
\usepackage[T1]{fontenc}  
\usepackage{hyperref}      
\usepackage{url}         
\usepackage{booktabs}     
\usepackage{amsfonts}     
\usepackage{nicefrac} 
\usepackage{microtype}   
\usepackage{xcolor}       
\usepackage{siunitx}
\usepackage{graphicx}
\usepackage[most]{tcolorbox}
\usepackage{pifont}
\usepackage{enumitem}
\usepackage{algorithm}
\usepackage{algorithmic}
\usepackage{wrapfig}
\usepackage{booktabs}
\usepackage[table]{xcolor}
\usepackage{marvosym}
\usepackage{caption}
\usepackage{booktabs}
\usepackage{multirow}
\usepackage{bm}
\usepackage{graphicx}
\usepackage{booktabs}
\usepackage[table]{xcolor}
\usepackage{makecell}

\usepackage[most]{tcolorbox}

\usepackage{amsmath,amssymb}

\usepackage{amsmath,amssymb,amsthm}
\definecolor{goodgreen}{RGB}{25,135,84}
\definecolor{badred}{RGB}{200,55,55}
\definecolor{neutralgray}{RGB}{120,120,120}
\newcolumntype{C}[1]{>{\centering\let\newline\\\arraybackslash\hspace{0pt}}m{#1}}
\newcolumntype{K}[1]{>{\centering\arraybackslash}p{#1}}

\newtheorem{theorem}{Theorem}[section]

\newtheorem{definition}[theorem]{Definition}

\definecolor{badred}{RGB}{200,55,55}
\definecolor{neutralgray}{RGB}{120,120,120}

\newcommand*\xbar[2][0.75]{%
    \sbox{\myboxA}{$\m@th#2$}%
    \setbox\myboxB\null%
    \ht\myboxB=\ht\myboxA%
    \dp\myboxB=\dp\myboxA%
    \wd\myboxB=#1\wd\myboxA%
    \sbox\myboxB{$\m@th\overline{\copy\myboxB}$}%
    \setlength\mylenA{\the\wd\myboxA}%
    \addtolength\mylenA{-\the\wd\myboxB}%
    \ifdim\wd\myboxB<\wd\myboxA%
        \rlap{\hskip 0.5\mylenA\usebox\myboxB}{\usebox\myboxA}%
    \else
        \hskip -0.5\mylenA\rlap{\usebox\myboxA}{\hskip 0.5\mylenA\usebox\myboxB}%
    \fi}
\makeatother

\definecolor{darkgreen}{rgb}{0.0, 0.5, 0.0}
\usepackage{marvosym} 

\begin{document}

\title{DP-MIND: Differentially private memory-guided Long-context Inference}

\author{
Zhihao Liu$^{1}$, Yitian Cheng$^{1}$, Shuang Liu$^{2}$, Rui Wen$^{3}$,
Yuke Hu$^{4}$, Zhan Qin$^{1}$ \\[1mm]
$^{1}$Zhejiang University \\
$^{2}$Ant Group \\
$^{3}$Institute of Science Tokyo \\
$^{4}$King Abdullah University of Science and Technology}

\maketitle

\begin{abstract}

LLM-as-a-Service has become a dominant paradigm for accessing large language models without requiring local deployment. 
However, the process raises severe privacy concerns when user queries may include sensitive or confidential contexts.
Differential privacy (DP) provides a principled approach to protecting such inputs, while existing DP inference methods are primarily designed for short- or moderate-context settings and struggle with million-token-scale inputs due to limited context capacity and severe utility degradation under DP perturbation.

In this paper, we propose DP-MIND, a memory-guided framework that scales differentially private inference to million-token contexts.
DP-MIND enables private long-context inference by locally perturbing the input and processing it chunk by chunk with a compact memory that preserves task-relevant information across the sequence.
To make this memory robust to DP noise, we optimize the memory construction policy directly on perturbed inputs using reinforcement learning, guided by both final answer correctness and memory consistency.
We further recover perturbation-corrupted answer spans through a local denoising module without exposing the original context to the external model.
We provide theoretical privacy analysis of DP-MIND and further conduct extensive empirical analysis across multiple long-context benchmarks with context lengths up to the million-token scale, demonstrating that DP-MIND consistently outperforms existing DP and long-context inference baselines in both scalability and utility.
\end{abstract}


%

\section{Introduction}

Large language models (LLMs) have rapidly become a general-purpose interface for knowledge-intensive tasks, including question answering, document analysis, code generation, personal assistance, and decision support \citep{zhao2026survey,song2025injecting}. 
Rather than deploying these models locally, users increasingly access them through ``LLM-as-a-Service'' platforms \citep{brown2020language,ouyang2022training,chowdhery2022palm,qwen2offcial2025}, where prompts and contextual documents are submitted to remote model providers for inference. 
This service-oriented paradigm makes advanced LLMs broadly accessible, but it also raises a fundamental privacy concern: users often need to transmit sensitive context to remote model providers. 
Such context may contain private conversations, medical or financial records, source code, legal documents, or other confidential information \citep{chen2025survey,bai2025safeguarding}. 


Differential privacy (DP) provides a principled approach to protecting sensitive user contexts during remote LLM inference.
Existing privacy-preserving inference methods primarily focus on improving the privacy--utility trade-off.
DP-Forward perturbs intermediate representations~\citep{du2023dpforward}, while ObfusLM combines embedding obfuscation and shuffling against inversion attacks~\citep{lin2025obfuslm}.
Recent methods explore batch-level request sharing~\citep{mai2026sharedrequest} and token-level DP that limits the influence of sensitive context tokens~\citep{thareja2026dp}.
To recover utility after privacy protection, Split-and-Denoise and InferDPT employ denoising and response extraction, respectively~\citep{mai2024split,tong2025inferdpt}.
DP$^3$ further extends the investigation to multi-turn inference, addressing privacy-budget consumption and semantic consistency~\citep{zheng2026dp3}.

Although prior methods have made substantial efforts to improve the ``privacy--utility'' trade-off, they are primarily designed for standard short- or moderate-context inference and do not address the key challenges posed by million-token-scale inputs: context lengths that exceed the model's effective processing capacity, severe utility degradation caused by DP noise, and substantial inference overhead. Even when the input fits within the context window, LLMs may struggle to reliably retrieve and utilize relevant information from long contexts~\citep{liu2024lost}.

Driven by the limitations of standard transformer inference, recent non-private long context methods have explored several directions, including \textbf{context-window extension}, \textbf{sparse or linear attention} and \textbf{memory-augmented} inference. 
Context-window extension methods enlarge the usable context length through positional adaptation~\citep{chen2023position,peng2024yarn,ding2024longrope,yang2025qwen25_1m}, while sparse attention and memory-based methods improve long-context scalability through efficient attention patterns and compact memory states~\citep{beltagy2020longformer,zaheer2020bigbird,yu2026memagent}. 
These techniques demonstrate the feasibility of scaling non-private long-context inference and provide potential directions for DP long-context inference.

In this paper, we strive to bridge this gap by studying differentially private long-context inference through the lens of memory-augmented inference that are previously under-explored in DP-literature.
We propose DP-MIND, a memory-guided framework for differentially private long-context inference.
To address the challenges of limited context capacity, DP-MIND partitions the long context into locally perturbed chunks and processes them sequentially, while a compact memory propagates task-relevant information across chunks without exposing the clean context. 
The memory policy is further optimized directly under DP perturbation using reinforcement learning, guided jointly by final answer correctness and memory consistency with the corresponding clean reference memory.
In addition, DP-MIND employs a trusted local denoising module that leverages the alignment between clean and perturbed contexts to restore corrupted answer spans, improving answer utility without exposing the clean context to the external LLM.
Our main contributions are summarized as follows:
\begin{itemize}[leftmargin=*, labelindent=0.2em]

    \item 
    To the best of our knowledge, we are the first to systematically study differentially private inference in million-token-scale long-context settings, where existing DP inference methods face fundamental challenges in scalability, utility, and efficiency.

    \item 
    We propose DP-MIND, a memory-guided framework that combines chunk-wise perturbation with sequential memory propagation for DP long-context inference, and provide a comprehensive theoretical analysis of the privacy guarantee.
    \item 
    We design a DP-aware reinforcement learning objective for memory construction, with a reward that jointly captures final answer correctness and memory consistency under DP perturbation.
    We further introduce a trusted local denoising module that restores corrupted answer spans while keeping the clean context entirely local.

    \item 
    We conduct extensive experiments across multiple long-context benchmarks, privacy budgets, and context lengths up to the million-token scale. 
    The results demonstrate that DP-MIND substantially outperforms existing DP and long-context inference baselines under DP perturbation.

\end{itemize}

\section{Background and Related Work}

\subsection{Differentially Private Inference}

Existing text privacy methods primarily protect sensitive inputs by perturbing tokens or intermediate representations.
Early approaches inject calibrated noise into token embeddings and project the perturbed representations back to discrete tokens~\citep{feyisetan2020privacy,xu2020mahalanobis}.
SanText and CusText further improve private text perturbation through natural token replacement and customized output sets, respectively~\citep{yue2021differential,chen2023customized}.
For LLM inference, DP-Forward perturbs input representations during the forward pass~\citep{du2023dpforward}, while ObfusLM combines embedding obfuscation and shuffling to defend against embedding inversion attacks~\citep{lin2025obfuslm}.
Recent approaches further explore model-agnostic prompt protection through batch-level request sharing~\citep{mai2026sharedrequest} and token-level differential privacy by bounding the influence of sensitive context tokens~\citep{thareja2026dp}.
To improve inference utility, Split-and-Denoise and InferDPT introduce denoising and response extraction strategies, respectively~\citep{mai2024split,tong2025inferdpt}, while DP$^3$ addresses privacy-budget consumption and semantic consistency in multi-turn inference~\citep{zheng2026dp3}.
Despite these advances, existing approaches do not systematically address the scalability and utility challenges of DP inference over million-token contexts.

\subsection{Long-context Inference}
Existing approaches to long-context inference can be broadly categorized into three directions below.
\textbf{Context window extension} methods enlarge the usable context length of pretrained LLMs through positional interpolation or rescaling, such as Position Interpolation, YaRN, and LongRoPE~\citep{chen2023position,peng2024yarn,ding2024longrope}. 
\textbf{Sparse attention} methods reduce the quadratic cost of full attention by using local, global, or random attention patterns, as in Longformer and BigBird~\citep{beltagy2020longformer,zaheer2020bigbird}. 
\textbf{Memory-augmented} inference offers another approach by processing long contexts sequentially while maintaining compact representations of previously observed information.
MemAgent employs recurrent memory updates and reinforcement learning for long-context reasoning~\citep{yu2026memagent}, while LycheeMemory combines chunk-wise compression, selective memory retrieval, and end-to-end reinforcement learning~\citep{chen2026dynamic}.
Despite their advances in scalability, existing long-context methods are primarily designed for clean inputs and do not explicitly account for DP-induced information loss.
Under DP perturbation, simply extending the context window or reducing attention complexity cannot address corrupted evidence, while compression and memory construction may further propagate errors from noisy inputs.
These limitations highlight the need for scalable DP inference that can robustly preserve and propagate task-relevant information across long contexts.

\section{Preliminary}

\subsection{Differential Privacy}

Differential privacy (DP)~\citep{dwork2006our} provides a formal privacy guarantee, defined as follows:

\begin{definition}[$(\varepsilon,\delta)$-Differential Privacy]
A randomized mechanism $\mathcal{M}:\mathcal{D}\rightarrow\mathcal{R}$ satisfies $(\varepsilon,\delta)$-differential privacy if for any two adjacent inputs $C \sim \widehat{C}$ and any measurable set $\mathcal{S}\subseteq\mathcal{R}$,
\[
    \Pr[\mathcal{M}(C)\in \mathcal{S}]
    \leq
    e^{\varepsilon}
    \Pr[\mathcal{M}(\widehat{C})\in \mathcal{S}]
    + \delta .
\]
When $\delta=0$, the mechanism satisfies pure $\varepsilon$-differential privacy.
\end{definition}

We adopt metric differential privacy, where the privacy guarantee depends on a distance between inputs~\citep{andres2013geo,feyisetan2020privacy}.

\begin{definition}[$d_\chi$-Privacy]\label{def:dx_privacy}
Let $(\mathcal{X}, d_\chi)$ be a metric space where $d_\chi$ denotes a distance metric (e.g., Euclidean distance). A randomized mechanism $M: \mathcal{X} \to \mathcal{Y}$ satisfies $\epsilon$-$d_\chi$-privacy if for any $x, x' \in \mathcal{X}$ and any $S \subseteq \mathcal{Y}$:
\begin{equation}
    \Pr[M(x) \in S] \le e^{\epsilon \cdot d_\chi(x, x')} \Pr[M(x') \in S].
\end{equation}
\end{definition}
Given a tokenized query sequence $X=[x_1,\ldots,x_n]$, the embedding layer maps it to $\mathbf{X}=[\bm{x}_1,\ldots,\bm{x}_n] \in \mathbb{R}^{n\times d}$ via an embedding matrix $\mathbf{E}\in\mathbb{R}^{|\mathcal{V}|\times d}$, where $|\mathcal{V}|$ is the vocabulary size and $d$ is the embedding dimension. Under the $d_\chi$-privacy metric with parameter $\epsilon$ and $L_2$ distance, privacy is enforced by adding Laplacian noise $\bm{z}$ with density proportional to $\exp(-\epsilon\|z\|)$ \citep{wu2017bolt}. Specifically, the noise is sampled as $\bm{z} = lv$, where $l \sim \Gamma(d, 1/\epsilon)$ and $v$ is uniformly drawn from the unit ball $B^d$, yielding the privatized embedding $M(\bm{x}_t) = \bm{x}_t + \bm{z}$.
\subsection{Post-processing Property}

A key property of differential privacy is that privacy cannot be weakened by post-processing. 
That is, once a mechanism produces a DP output, any further randomized or deterministic computation that only depends on this DP output does not incur additional privacy loss~\citep{dwork2014algorithmic}.

\begin{theorem}[Post-processing\citep{dwork2014algorithmic}]
Let $\mathcal{M}$ be an $(\varepsilon,\delta)$-DP mechanism, and let $f$ be any randomized mapping that does not access the original private input except through $\mathcal{M}(C)$. 
Then the composed mechanism $f(\mathcal{M}(C))$ is also $(\varepsilon,\delta)$-DP.
\end{theorem}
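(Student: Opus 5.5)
The plan is to first handle a deterministic post-processing map $f$, and then reduce the randomized case to it by conditioning on $f$'s internal randomness.

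\emph{Deterministic case.} Suppose $f:\mathcal{R}\to\mathcal{R}'$ is a measurable function. Fix adjacent inputs $C\sim\widehat{C}$ and a measurable set $\mathcal{T}\subseteq\mathcal{R}'$. Its preimage $\mathcal{S}=f^{-1}(\mathcal{T})$ is measurable in $\mathcal{R}$, and
\[
\Pr[f(\mathcal{M}(C))\in\mathcal{T}]=\Pr[\mathcal{M}(C)\in\mathcal{S}]\le e^{\varepsilon}\Pr[\mathcal{M}(\widehat{C})\in\mathcal{S}]+\delta=e^{\varepsilon}\Pr[f(\mathcal{M}(\widehat{C}))\in\mathcal{T}]+\delta .
\]
The inequality is exactly the definition of $(\varepsilon,\delta)$-DP applied to $\mathcal{S}$.

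\emph{Randomized case.} Write a randomized $f$ as $f(r)=g(r,\omega)$, where $g$ is deterministic and jointly measurable, and $\omega\sim\mu$ is drawn independently of $\mathcal{M}$'s coins. The hypothesis that $f$ accesses $C$ only through $\mathcal{M}(C)$ is what guarantees $\mu$ does not depend on $C$. For each fixed $\omega$, the map $g(\cdot,\omega)$ is deterministic, so the previous step gives
\[
\Pr[g(\mathcal{M}(C),\omega)\in\mathcal{T}]\le e^{\varepsilon}\Pr[g(\mathcal{M}(\widehat{C}),\omega)\in\mathcal{T}]+\delta .
\]
Integrating this inequality against $\mu(d\omega)$, and using Fubini together with the independence of $\omega$ from $\mathcal{M}$, gives the same inequality for $f(\mathcal{M}(C))$. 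Since $\int\delta\,d\mu=\delta$, the additive term stays $\delta$.

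The main obstacle is the measure-theoretic bookkeeping. We need $f$ to admit such a representation, which holds for Markov kernels on standard Borel spaces; alternatively we can work directly with a kernel $K(r,\mathcal{T})$. In the kernel form, the goal becomes
\[
\int K(r,\mathcal{T})\,dP_C(r)\le e^{\varepsilon}\int K(r,\mathcal{T})\,dP_{\widehat{C}}(r)+\delta .
\]
To prove it, I would approximate the $[0,1]$-valued function $K(\cdot,\mathcal{T})$ from below by simple functions built from level sets $\{r: K(r,\mathcal{T})>t\}$, and use the layer-cake formula $\int h\,dP=\int_0^1 P(h>t)\,dt$. Each level set satisfies the DP inequality, and integrating over $t\in[0,1]$ preserves both the factor $e^{\varepsilon}$ and the additive $\delta$. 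This layer-cake route avoids any representation theorem, so I would present it as the primary argument.
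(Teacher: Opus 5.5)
Your argument is correct. The paper gives no proof of its own; it simply cites Dwork and Roth. Their proof is exactly your first two steps: the preimage argument for deterministic $f$, and then the observation that a randomized $f$ is a convex combination of deterministic maps, stated informally there for discrete ranges.

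Your layer-cake version,
$\int K(r,\mathcal{T})\,dP_C(r)=\int_0^1 P_C\bigl(K(\cdot,\mathcal{T})>t\bigr)\,dt$,
is a cleaner route than either of those. It needs no functional representation of the kernel, so it works on arbitrary measurable output spaces. It also makes explicit that the bound $0\le K\le 1$ is what keeps the additive term equal to exactly $\delta$. You do not need the simple-function approximation; the layer-cake identity alone suffices.

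The same computation, with $e^{\varepsilon}$ replaced by $e^{\varepsilon d_\chi(x,x')}$ and $\delta=0$, gives the metric-DP post-processing property. That is the version the paper actually invokes in its DP-MIND privacy theorem.
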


The same principle also applies to metric DP: if $\mathcal{M}$ satisfies $\varepsilon d_{\mathcal{X}}$-privacy, then any post-processing of $\mathcal{M}(x)$ satisfies the same $\varepsilon d_{\mathcal{X}}$-privacy guarantee.

\section{Methodology}
\label{sec:methodology}

In this section, we propose \textbf{DP-MIND}, a \textbf{D}ifferentially 
\textbf{P}rivate \textbf{M}emory-guided \textbf{IN}ference framework with local 
\textbf{D}enoising for privacy-preserving long-context question answering. Given a long context $C$ and a query $Q$, DP-MIND aims to answer $Q$ without exposing the clean context to the external LLM.

\subsection{Chunk-wise DP Memory Inference}
\label{subsec:chunk_wise_dp_memory}

As shown in Figure~\ref{fig:workflow}, the first stage of DP-MIND performs chunk-wise DP memory inference. 
Given a long context $C$ and a query $Q$, DP-MIND first partitions $C$ into $N$ consecutive chunks,
\[
    C=(C_1,C_2,\ldots,C_N).
\]
Instead of exposing these clean chunks to the external LLM, DP-MIND applies a DP perturbation mechanism to the whole context and obtains the corresponding DP-perturbed chunks:
\[
    (C_1',C_2',\ldots,C_N') \leftarrow \mathcal{M}_{\mathrm{DP}}(C).
\]
Here, $C_i'$ denotes the noisy version of $C_i$ after applying DP perturbation to the original context. 
Equivalently, the mechanism perturbs the tokens in the long context and then returns the noisy chunks according to the same chunk partition. 
The external LLM only receives the DP-perturbed chunks $\{C_i'\}_{i=1}^N$ and never observes the clean chunks $\{C_i\}_{i=1}^N$.

\begin{wrapfigure}{r}{0.48\textwidth}
    \vspace{-1.3em}
    \centering
    \includegraphics[width=0.46\textwidth]{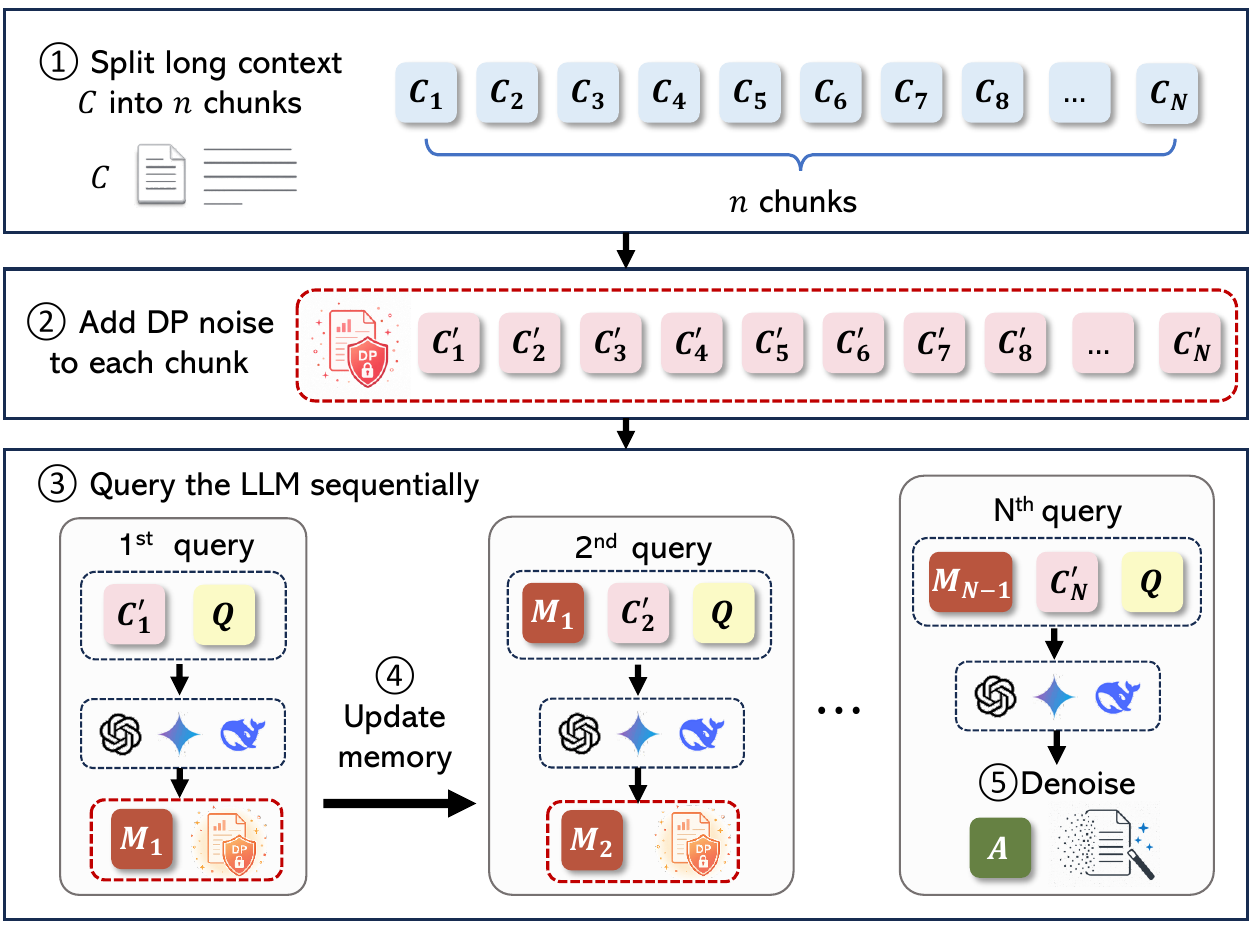}
    \caption{Workflow of DP-MIND.}
    \label{fig:workflow}
\end{wrapfigure}

After DP perturbation, DP-MIND queries the external LLM sequentially, as shown in the lower part of Figure~\ref{fig:workflow}. 
The framework maintains a memory state $M_i$ that summarizes task-relevant information from the processed noisy chunks. 
Starting from $M_0=\emptyset$, the memory is updated as
\[
    M_i \leftarrow \mathsf{LLM}(M_{i-1}, C_i', Q),
    \qquad i=1,\ldots,N-1.
\]
Thus, $M_i$ serves as a compact task-oriented state carrying information from the first $i$ DP-perturbed chunks. 
Since each memory state is computed only from DP-perturbed chunks and previous DP-derived memories, memory construction does not introduce direct access to the clean context.

At the final step, DP-MIND generates a DP answer from the last memory state, the final DP-perturbed chunk, and the query:
\[
    A_{\mathrm{DP}} \leftarrow \mathsf{LLM}(M_{N-1}, C_N', Q).
\]
Therefore, the entire external inference process only depends on the DP-perturbed context, the query, and the internal randomness of the external LLM. 
This preserves the privacy boundary of DP inference: the external LLM only accesses DP-perturbed chunks and DP-derived memories, while the clean context remains inside the trusted local environment.

\subsection{Privacy Analysis}
\label{subsec:privacy_analysis}

\begin{algorithm}[t]
\caption{$\mathcal{M}_{\mathrm{DP-MIND}}$: Chunk-wise DP Memory Inference with Local Denoising}
\label{alg:dp_mind}
\begin{algorithmic}[1]
\REQUIRE Long context $C=(w_1,\ldots,w_L)$, query $Q$, external LLM $\mathsf{LLM}$, maximum restoration span length $K$
\ENSURE Final answer $A_{\mathrm{final}}$

\STATE Apply the DP perturbation mechanism $\mathcal{M}_{\mathrm{DP}}$ in Algorithm~\ref{alg:dp_noise_injection} to obtain 
\vspace{-0.4em}
\[
    C'=(w_1',\ldots,w_L'),
    \qquad
    (C_1',\ldots,C_N')\leftarrow\mathcal{M}_{\mathrm{DP}}(C).
\]
\vspace{-1em}
\STATE Record local alignment $\Pi=\{(w_t',w_t)\}_{t=1}^{L}$.
\STATE Initialize memory $M_0\leftarrow\emptyset$.

\FOR{$i=1$ to $N-1$}
    \STATE Update memory $M_i\leftarrow \mathsf{LLM}(M_{i-1},C_i',Q)$.
\ENDFOR

\STATE Generate DP answer $A_{\mathrm{DP}}\leftarrow \mathsf{LLM}(M_{N-1},C_N',Q)$.
\STATE Denoise answer $A_{\mathrm{final}}\leftarrow \mathsf{Denoise}(A_{\mathrm{DP}},\Pi,K)$.
\RETURN $A_{\mathrm{final}}$

\end{algorithmic}
\end{algorithm}

We analyze the privacy guarantee of DP-MIND by showing that its entire external inference process is a post-processing of the underlying DP perturbation mechanism.

\begin{theorem}[Privacy Guarantee of DP-MIND]
\label{thm:privacy_mind}
Let $\mathcal{M}_{\mathrm{DP}}: C \mapsto C'$ be the local context perturbation mechanism used by DP-MIND.
Assume that $\mathcal{M}_{\mathrm{DP}}$ satisfies $\varepsilon d_{\chi}$-privacy, for a fixed query $Q$, define the DP-MIND mechanism
\[
    \mathcal{M}_{\mathrm{DP-MIND}}(C,Q)
    :=
    \mathcal{F}_{\mathrm{MIND}}
    \bigl(
        \mathcal{M}_{\mathrm{DP}}(C),Q
    \bigr),
\]
where $\mathcal{F}_{\mathrm{MIND}}$ first splits
$C'=\mathcal{M}_{\mathrm{DP}}(C)$
into $(C_1',\ldots,C_N')$, and then performs
\[
M_i \leftarrow \mathsf{LLM}(M_{i-1},C_i',Q),\ i=1,\ldots,N-1,
\qquad
A_{\mathrm{DP}} \leftarrow \mathsf{LLM}(M_{N-1},C_N',Q).
\]
The output of $\mathcal{M}_{\mathrm{DP-MIND}}$ is the complete external view of this interaction, including the DP-perturbed chunks, generated memory states, and final DP answer.
Then $\mathcal{M}_{\mathrm{DP-MIND}}$ also satisfies $\varepsilon d_{\chi}$-privacy.
\end{theorem}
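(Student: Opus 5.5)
The plan is to show that $\mathcal{F}_{\mathrm{MIND}}(\cdot,Q)$ is a (randomized) mapping that touches the clean context $C$ only through $C'=\mathcal{M}_{\mathrm{DP}}(C)$. Once that is established, the post-processing theorem in its metric-DP form, as stated right after the Post-processing theorem, gives the claim directly.

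Concretely, I would fix $Q$ and write $\mathcal{F}_{\mathrm{MIND}}(C',Q;\omega)$ explicitly as a function of $C'$ and an independent random seed $\omega$. The seed $\omega$ collects all internal randomness of the external $\mathsf{LLM}$ across the $N$ calls, and its distribution does not depend on $C$. I would then verify by induction on $i$ that each component of the external view is a measurable function of $(C',Q,\omega)$.
\begin{itemize}
\item The split $(C_1',\ldots,C_N')$ is a deterministic function of $C'$. This requires the chunk boundaries to be fixed positions (or determined from $C'$ itself), not computed from the clean $C$.
\item $M_0=\emptyset$ is constant, and if $M_{i-1}$ is a function of $(C',Q,\omega)$, then so is $M_i=\mathsf{LLM}(M_{i-1},C_i',Q;\omega_i)$.
\item $A_{\mathrm{DP}}=\mathsf{LLM}(M_{N-1},C_N',Q;\omega_N)$ is likewise a function of $(C',Q,\omega)$.
\end{itemize}
Hence the full tuple $(C_1',\ldots,C_N',M_1,\ldots,M_{N-1},A_{\mathrm{DP}})=g(C',\omega)$ for a fixed $g$.

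Next I would verify the metric-DP inequality directly, rather than only citing the theorem. For any $C,\widehat C$ and measurable set $\mathcal{S}$ in the output space, let $\mathcal{S}_\omega=\{c':g(c',\omega)\in\mathcal{S}\}$. Conditioning on $\omega$ gives
\[
\Pr[\mathcal{M}_{\mathrm{DP-MIND}}(C,Q)\in\mathcal{S}]=\mathbb{E}_\omega\bigl[\Pr[\mathcal{M}_{\mathrm{DP}}(C)\in\mathcal{S}_\omega]\bigr].
\]
By the $\varepsilon d_\chi$-privacy of $\mathcal{M}_{\mathrm{DP}}$, this is at most
\[
\mathbb{E}_\omega\bigl[e^{\varepsilon d_\chi(C,\widehat C)}\Pr[\mathcal{M}_{\mathrm{DP}}(\widehat C)\in\mathcal{S}_\omega]\bigr].
\]
The right-hand side equals $e^{\varepsilon d_\chi(C,\widehat C)}\Pr[\mathcal{M}_{\mathrm{DP-MIND}}(\widehat C,Q)\in\mathcal{S}]$, using independence of $\omega$ from $C$ and Fubini.

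The main obstacle is making precise that nothing in $\mathcal{F}_{\mathrm{MIND}}$ leaks $C$ through a side channel. Three points need care.
\begin{itemize}
\item The chunking must be data-independent.
\item The LLM's randomness must be independent of $C$.
\item The local alignment $\Pi$ and $\mathsf{Denoise}$ step of Algorithm~\ref{alg:dp_mind} must be excluded from the external view, since they read clean tokens.
\end{itemize}
I would state these explicitly as the scope of the theorem. The output is only the external view listed in the statement, and denoising happens locally after that view is produced, so it is not covered by the claim.

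A secondary measurability concern arises because LLM outputs are discrete strings. It is handled by noting that $g(\cdot,\omega)$ is measurable whenever the discretization inside $\mathcal{M}_{\mathrm{DP}}$ is, which I would take as given.
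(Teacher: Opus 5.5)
Your proposal is correct and follows the same route as the paper, whose proof observes that the splitting, the sequential memory updates, and the final answer depend only on $C'$, the fixed $Q$, and the LLM's internal randomness, and then cites the post-processing property of metric DP. Your seed-conditioning computation with Fubini just proves that cited property for randomized post-processing, and your scope conditions make explicit what the paper assumes implicitly: data-independent chunking, LLM randomness independent of $C$, and excluding $\Pi$ and $\mathsf{Denoise}$ from the external view.
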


\begin{proof}
Let $C'=\mathcal{M}_{\mathrm{DP}}(C)$. By assumption, $\mathcal{M}_{\mathrm{DP}}$ satisfies $\varepsilon d_{\chi}$-privacy. After obtaining $C'$, DP-MIND only performs post-processing on the perturbed context: it splits $C'$ into chunks, sequentially queries the LLM, updates the memory states, and generates the final DP answer. All these operations depend only on $C'$, the fixed query $Q$, and the internal randomness of the LLM, without accessing the clean context $C$. Therefore,
\[
    \mathcal{M}_{\mathrm{DP-MIND}}(C,Q)
    =
    \mathcal{F}_{\mathrm{MIND}}
    \bigl(
        \mathcal{M}_{\mathrm{DP}}(C),Q
    \bigr)
\]
is a post-processing of $\mathcal{M}_{\mathrm{DP}}(C)$.
By the post-processing property of differential privacy~\citep{dwork2014algorithmic}, post-processing does not increase the privacy loss.
Thus, $\mathcal{M}_{\mathrm{DP-MIND}}$ satisfies the same
$\varepsilon d_{\chi}$-privacy guarantee as
$\mathcal{M}_{\mathrm{DP}}$.
\end{proof}

\subsection{Local Answer Denoising}
\label{subsec:trusted_local_answer_denoising}

Task-relevant tokens, such as entities, numbers, and short answer spans, may be corrupted by DP perturbation, leading to utility loss in the generated answer.
To recover such errors without revealing the clean context to the external LLM, DP-MIND performs local answer denoising after inference. During local DP perturbation, DP-MIND records the token-level alignment between the clean context
$C=(w_1,\ldots,w_L)$
and the perturbed context
$C'=(w_1',\ldots,w_L')$:
\[
    \Pi=\{(w_t',w_t)\}_{t=1}^{L},
\]
which is retained only in the trusted local environment.
Using $\Pi$, we construct restoration rules for aligned spans of length up to $K$:
\[
    (w_t',\ldots,w_{t+k-1}')
    \rightarrow
    (w_t,\ldots,w_{t+k-1}),
    \qquad 1\leq k\leq K.
\]
Only perturbed spans with an unambiguous clean counterpart are included.
Given the DP answer $A_{\mathrm{DP}}$, the local module scans for matching noisy spans and replaces them with the corresponding clean spans, preferring the longest valid match.
The denoised answer is therefore obtained as
\[
    A_{\mathrm{final}}
    \leftarrow
    \mathsf{Denoise}(A_{\mathrm{DP}},\Pi,K).
\]

Algorithm~\ref{alg:span_restore} summarizes this procedure.
The denoising module only performs alignment-based span restoration rather than regenerating the answer.
Since the clean context and alignment $\Pi$ remain entirely within the trusted local environment, this step introduces no additional information leakage to the external LLM under our threat model.

\begin{algorithm}[t]
\caption{Local Answer $\mathsf{Denoise}$ in DP-MIND}
\label{alg:span_restore}
\begin{algorithmic}[1]
\REQUIRE DP answer $A_{\mathrm{DP}}$, local alignment $\Pi=\{(w_t',w_t)\}_{t=1}^{L}$, max restoration span length $K$
\ENSURE Final answer $A_{\mathrm{final}}$

\STATE Build restoration map $\mathcal{R}\leftarrow \mathsf{BuildRestoreMap}(\Pi,K)$.
\STATE Tokenize $A_{\mathrm{DP}}$ as $(a_1,\ldots,a_m)$.
\STATE Initialize $A_{\mathrm{final}}\leftarrow\emptyset$ and $r\leftarrow 1$.

\WHILE{$r\leq m$}
    \STATE $(\bar{s},k)\leftarrow
    \mathsf{LongestRestore}
    (a_r,\ldots,a_{\min(r+K-1,m)},\mathcal{R})$.
    \IF{$k>0$}
        \STATE Append $\bar{s}$ to $A_{\mathrm{final}}$ and set $r\leftarrow r+k$.
    \ELSE
        \STATE Append $a_r$ to $A_{\mathrm{final}}$ and set $r\leftarrow r+1$.
    \ENDIF
\ENDWHILE

\RETURN $A_{\mathrm{final}}$
\end{algorithmic}
\end{algorithm}

\subsection{DP-Aware Memory Policy Optimization}
\label{subsec:dp_rl_training}
We optimize the memory construction policy directly under DP perturbation using reinforcement learning.
The training objective targets two key challenges introduced by DP noise.
An \textbf{\emph{answer reward}} improves end-task utility by directly rewarding the correctness of the final denoised answer, while a \textbf{\emph{memory-consistency reward}} improves robustness by encouraging DP-derived memory states to remain close to their clean counterparts throughout sequential memory construction.
Together, the two rewards jointly optimize final prediction quality and the stability of intermediate memory under DP perturbation.
The overall reward is defined as
\[
    R_{\mathrm{final}}
    =
    R_{\mathrm{ans}}
    +
    \lambda R_{\mathrm{mem}},
\]
where $\lambda$ controls the contribution of the memory-consistency reward.

\paragraph{Answer reward $R_{\mathrm{ans}}$.}
The answer reward is designed to directly improve end-task utility under DP perturbation.
Given the DP-perturbed chunks $(C_1',\ldots,C_N')$, the policy sequentially constructs the memory trajectory
\[
    M_i^{\mathrm{DP}}
    \leftarrow
    \pi_{\theta}(M_{i-1}^{\mathrm{DP}},C_i',Q),
    \qquad i=1,\ldots,N-1,
\]
and generates the final DP answer, which is subsequently denoised in the trusted local environment:
\[
    A_{\mathrm{DP}}
    \leftarrow
    \pi_{\theta}(M_{N-1}^{\mathrm{DP}},C_N',Q),
    \qquad
    A_{\mathrm{final}}
    \leftarrow
    \mathsf{Denoise}(A_{\mathrm{DP}},\Pi),
\]
where $\Pi$ denotes the locally maintained alignment between the clean and perturbed contexts.
We then compute
\[
    R_{\mathrm{ans}}
    =
    \mathsf{Acc}(A_{\mathrm{final}},Y),
\]
where $Y$ is the gold answer.
By evaluating the final denoised answer, $R_{\mathrm{ans}}$ directly optimizes the utility of the complete DP-MIND inference pipeline.

\paragraph{Memory-consistency reward $R_{\mathrm{mem}}$.}
The memory-consistency reward is designed to improve the robustness of sequential memory construction under DP noise.
Since answer-level supervision only provides a terminal signal, it does not directly constrain how well intermediate memories preserve task-relevant information.
We therefore construct a clean reference memory trajectory:
\[
    M_i^{\mathrm{clean}}
    \leftarrow
    \pi_{\mathrm{ref}}
    (M_{i-1}^{\mathrm{clean}},C_i,Q),
    \qquad i=1,\ldots,N-1,
\]
and locally denoise the corresponding DP-derived memory:
\[
    \bar{M}_i^{\mathrm{DP}}
    \leftarrow
    \mathsf{Denoise}(M_i^{\mathrm{DP}},\Pi).
\]

We define the memory-consistency reward as the negative average embedding distance between the denoised DP-derived memory and its clean reference:
\[
    R_{\mathrm{mem}}
    =
    -
    \frac{1}{N-1}
    \sum_{i=1}^{N-1}
    \left\|
        \mathrm{Emb}(\bar{M}_i^{\mathrm{DP}})
        -
        \mathrm{Emb}(M_i^{\mathrm{clean}})
    \right\|_2,
\]
where $\mathrm{Emb}(\cdot)$ maps a memory state to its embedding representation.
A higher $R_{\mathrm{mem}}$ therefore corresponds to a smaller discrepancy from the clean reference trajectory, encouraging the policy to construct memory states that are more robust to DP perturbation.

\section{Experiments}

\subsection{Experiment Setup}

\textbf{Models and Datasets.}
We evaluate DP-MIND using the Qwen2.5 family, including 7B and 14B models~\citep{yang2024qwen25}.
For long-context scaling, we use the corresponding Qwen2.5-1M variants which supports approximately 1M tokens~\citep{yang2025qwen25_1m}.
We evaluate DP-MIND on RULER-HQA and NIAH.
RULER-HQA is constructed from HotpotQA following the RULER protocol~\citep{yang2018hotpotqa,hsieh2024ruler} and evaluates multi-hop QA over contexts from 7K to 1.8M tokens, with 128 examples per length.
NIAH follows the RULER needle-in-a-haystack setting~\citep{hsieh2024ruler} and evaluates Single, Multi-Key, and Multi-Query retrieval at 32K, 128K, and 512K tokens.


\begin{table*}[t]

\centering
\scriptsize
\caption{
Sub-EM comparison on RULER-HQA under different privacy budgets and context lengths.
Qwen2.5-7B denotes the standard baseline,
Sparse Attention denotes Longformer-style local sliding-window attention~\citep{beltagy2020longformer},
Long-Context Scaling denotes direct inference with Qwen2.5-7B-Instruct-1M~\citep{yang2025qwen25_1m}, which is a variant designed for up to one million tokens of context,
and Memory-Augmented denotes direct chunk-wise querying with recurrent memory propagation.
}
\label{tab:subem_multi_eps}

\setlength{\tabcolsep}{5.2pt}
\renewcommand{\arraystretch}{1.18}
\definecolor{bestblue}{RGB}{20,82,165}
\definecolor{bestbg}{RGB}{238,245,255}
\definecolor{missgray}{RGB}{145,145,145}

\newcommand{\ours}[1]{\textcolor{bestblue}{#1}}
\newcommand{\oursbest}[1]{\textcolor{bestblue}{\textbf{#1}}}
\newcommand{\best}[1]{\textbf{#1}}
\newcommand{\NA}{\textcolor{missgray}{--}}
\newcommand{\TODO}{\textcolor{missgray}{TODO}}

\begin{tabular}{@{}c l ccccccccc@{}}
\toprule
\textbf{Privacy Budget}
& \textbf{Method}
& \textbf{7K}
& \textbf{14K}
& \textbf{28K}
& \textbf{56K}
& \textbf{112K}
& \textbf{224K}
& \textbf{448K}
& \textbf{896K}
& \textbf{1.8M} \\
\midrule

\multirow{5}{*}{$\epsilon=6$}
& Qwen2.5-7B (baseline)
& 36.72 & 37.50 & 25.78 & \NA & \NA & \NA & \NA & \NA & \NA \\

& Sparse Attention
& 36.72 & 37.50 & 25.78 & 5.47 & 3.12 & 0.00 & \NA & \NA & \NA \\

& Long-Context Scaling
& 40.62 & 40.62 & 35.16 & 32.03 & 22.66 & 18.75 & 8.56 & 0.00 & \NA \\

& Memory-Augmented
& 38.28 & 32.03 & 33.59 & 28.91 & 25.00 & 17.97 &  8.59 & 6.25 & 0.00 \\

\rowcolor{bestbg}
& \ours{\textbf{DP-MIND}}
& \oursbest{56.25}
& \oursbest{56.25}
& \oursbest{48.44}
& \oursbest{46.09}
& \oursbest{44.53}
& \oursbest{47.66}
& \oursbest{53.12}
& \oursbest{49.22}
& \oursbest{39.06} \\

\addlinespace[2pt]
\midrule

\multirow{5}{*}{$\epsilon=7$}
& Qwen2.5-7B (baseline)
& 47.66 & 46.09 & 40.62 & \NA & \NA & \NA & \NA & \NA & \NA \\

& Sparse Attention
& 47.66 & 46.09 & 40.62 & 6.25 & 4.69 & 0.00 & \NA & \NA & \NA \\

& Long-Context Scaling
& 46.09 & 53.12 & 42.97 & 45.31 & 35.94 & 26.56 & 9.38 & 0.00 & \NA \\

& Memory-Augmented
& 58.59 & 41.41 & 44.53 & 30.47 & 30.47 & 29.69 & 27.34 & 21.09 & 17.97 \\


\rowcolor{bestbg}
& \ours{\textbf{DP-MIND}}
& \oursbest{75.78}
& \oursbest{74.22}
& \oursbest{71.09}
& \oursbest{73.44}
& \oursbest{68.56}
& \oursbest{70.31}
& \oursbest{71.88}
& \oursbest{64.81}
& \oursbest{59.38} \\

\addlinespace[2pt]
\midrule

\multirow{5}{*}{$\epsilon=8$}
& Qwen2.5-7B (baseline)
& 55.47 & 49.22 & 50.00 & \NA & \NA & \NA & \NA & \NA & \NA \\

& Sparse Attention
& 55.47 & 49.22 & 50.00 & 4.69 & 3.12 & 0.00 & \NA & \NA & \NA \\

& Long-Context Scaling
& 55.47 & 51.56 & 50.78 & 55.47 & 45.31 & 37.50 & 9.38 & 0.00 & \NA \\

& Memory-Augmented
& 59.38 & 44.53 & 42.19 & 39.06 & 42.19 & 39.84 & 29.69 & 32.81 & 22.66 \\

\rowcolor{bestbg}
& \ours{\textbf{DP-MIND}}
& \oursbest{80.47}
& \ours{75.00}
& \oursbest{77.34}
& \oursbest{80.47}
& \oursbest{73.44}
& \oursbest{77.34}
& \oursbest{72.66}
& \oursbest{68.75}
& \oursbest{63.28} \\

\bottomrule
\end{tabular}
\end{table*}

\begin{figure}[t]
    \centering
    \includegraphics[width=0.9\linewidth]{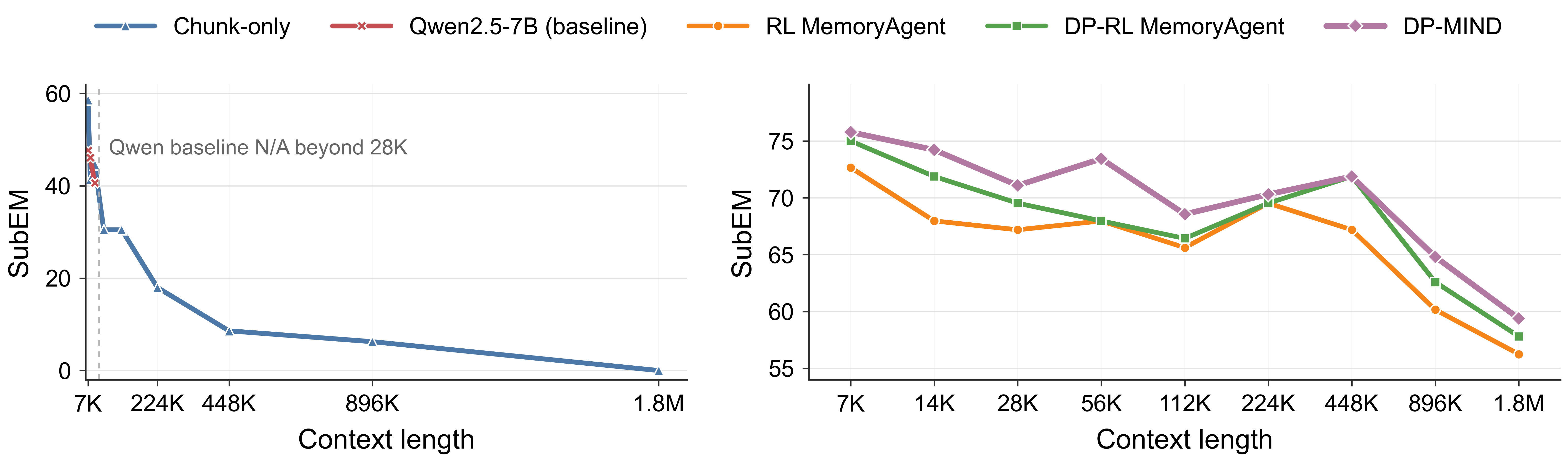}
    \caption{
Results on RULER-HQA under $\epsilon=7$. 
Chunk-only inference uses chunk-wise memory processing and can scale to long contexts, but its utility drops quickly without RL-based memory optimization. 
DP-MIND achieves the best overall performance.
}
    \label{fig:ablation_length}
\end{figure}

\subsection{Long-context Inference Task}

\textbf{DP-MIND is the only method that consistently supports ultra-long DP-perturbed contexts.}
Table~\ref{tab:subem_multi_eps} reports the SubEM results on RULER-HQA across different privacy budgets and context lengths. 
Here, ``--'' indicates that the corresponding method cannot process the input because the context length exceeds its supported context window or serving limit. 
The standard Qwen2.5-7B baseline only supports up to 28K tokens. 
Sparse Attention extends the usable length to 224K tokens, but still fails to handle 448K and longer contexts. 
Long-Context Scaling, implemented with Qwen2.5-7B-Instruct-1M, can process up to 1.8M tokens, but it relies on a model with an enlarged context window. 
In contrast, DP-MIND processes the context chunk by chunk and remains applicable across the full range from 7K to 1.8M tokens.

\textbf{DP-MIND achieves much higher utility than scalable baselines.}
Although Long-Context Scaling can process ultra-long inputs, its utility degrades rapidly as the context grows, while Sparse Attention deteriorates even earlier.
Under $\epsilon=7$, Long-Context Scaling drops from 45.31 SubEM at 56K to 26.56 at 224K and 0.00 at 896K and 1.8M, whereas DP-MIND maintains 73.44, 70.31, 64.81, and 59.38, respectively.
These results demonstrate that simply extending the context window or adopting sparse attention is insufficient under DP perturbation, while memory-guided inference preserves task-relevant information more effectively.

\textbf{DP-MIND consistently generalizes to the larger 14B backbone.}
We further evaluate DP-MIND with Qwen2.5-14B to examine whether its effectiveness generalizes to a larger model scale.
As shown in Figure~\ref{fig:ruler_hqa_14b}, DP-MIND achieves the best overall performance across all evaluated context lengths.
Compared with MemAgent, DP-MIND consistently improves performance at every context length, showing that DP-aware RL training and local answer denoising remain effective when scaling from 7B to 14B.
Moreover, the benefit of the memory-based inference framework becomes increasingly pronounced as the context grows.
Compared with direct long-context scaling, Memory-Augmented inference provides relatively modest gains in the short-context regime, but the gap widens substantially at longer contexts, reaching approximately 19.5 SubEM points at 1.8M tokens.
These results further demonstrate that the advantage of DP-MIND is not specific to the 7B backbone and that memory-guided inference becomes increasingly valuable in the ultra-long-context regime.

\begin{figure}[t]
    \centering
    \includegraphics[width=0.6\linewidth]{ruler_hqa_14b_scaling.png}
    \caption{
    Scaling performance on RULER-HQA with Qwen2.5-14B under $\epsilon=7$.
    DP-MIND consistently achieves the best overall performance, while the advantage of memory-based inference becomes more pronounced as the context length increases.
    }
    \label{fig:ruler_hqa_14b}
\end{figure}

\subsection{Long-context Retrieval Task}

\begin{figure}
    \centering
    \includegraphics[width=0.95\linewidth]{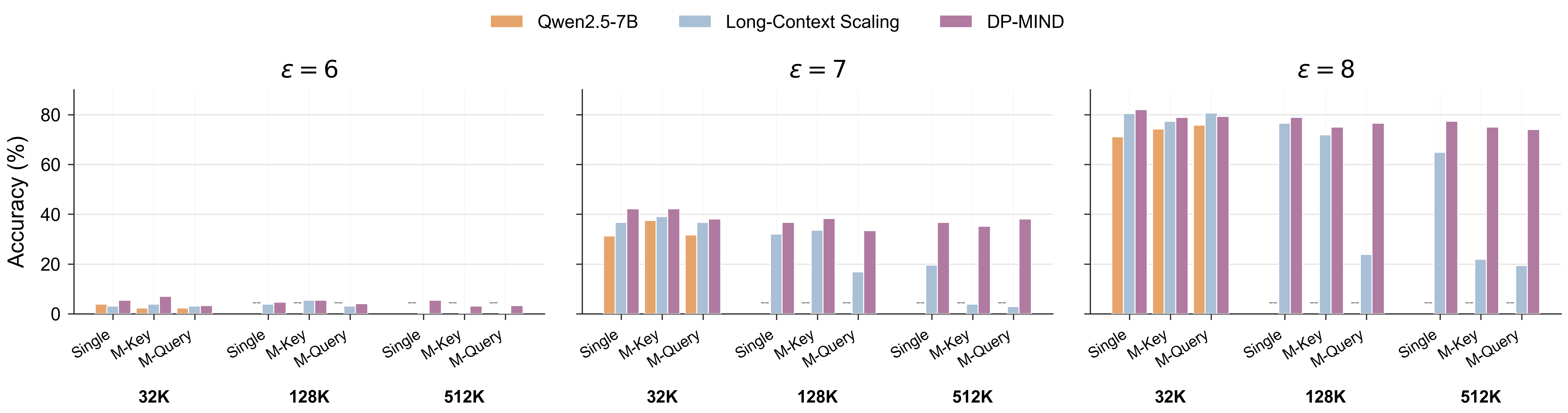}
    \caption{
NIAH results under different privacy budgets and context lengths.
Results are reported on Single, Multi-Key (M-Key), and Multi-Query (M-Query) with length ranging from 32k to 512k.
}
\label{fig:niah_results}
\end{figure}

We further extend the evaluation to NIAH to examine long-context retrieval under DP perturbation.
As shown in Figure~\ref{fig:niah_results}, DP-MIND consistently outperforms the baselines across different context lengths and privacy budgets.
\textbf{Across privacy budgets, DP-MIND remains effective under different perturbation strengths.}
When $\epsilon=6$, severe DP noise makes retrieval challenging for all methods; at $\epsilon=7$, DP-MIND maintains relatively stable performance as the context length increases, while the baselines degrade more noticeably; and at $\epsilon=8$, DP-MIND achieves the largest overall gains.
\textbf{The advantage of DP-MIND also becomes more pronounced as the retrieval context grows.}
The performance gap between DP-MIND and the baselines generally widens at longer context lengths, showing that memory-guided inference is particularly effective for preserving and retrieving sparse target information from ultra-long DP-perturbed contexts.

\subsection{Ablation Study}

\begin{figure}
    \centering
    \includegraphics[width=0.75\linewidth]{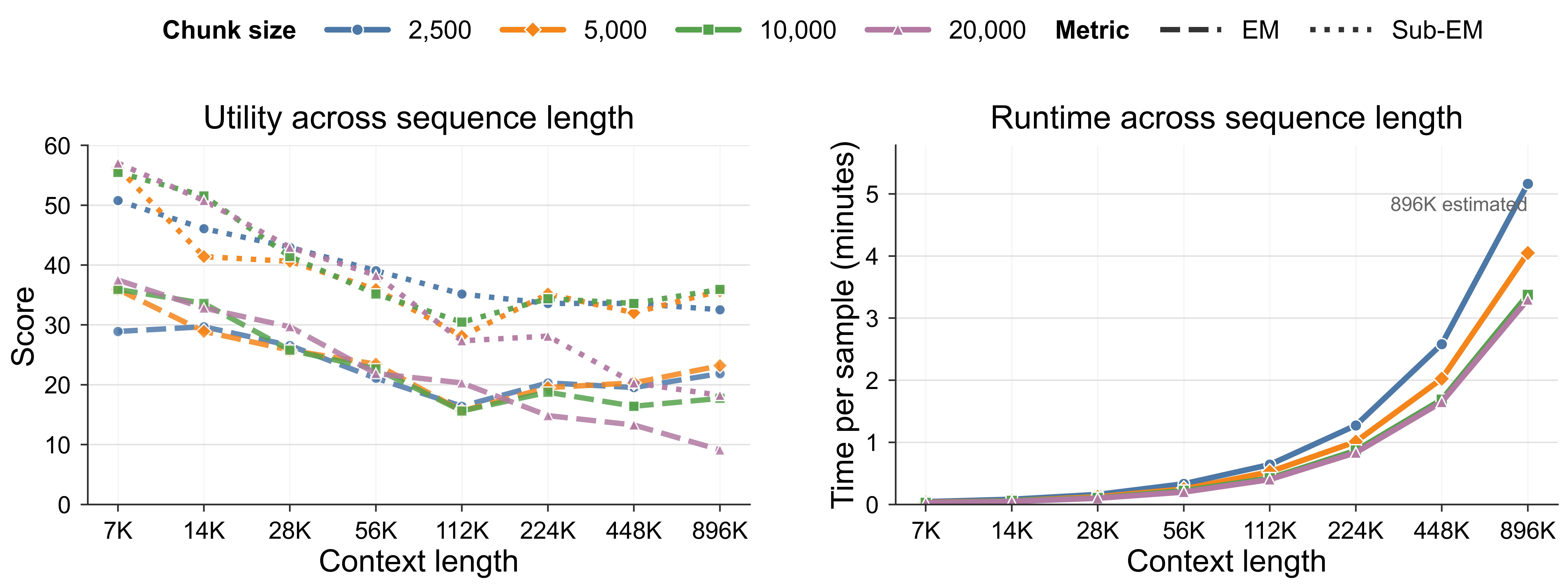}
    \caption{
Impact of chunk size on utility and runtime.
Larger chunks reduce inference time, while moderate chunk sizes generally maintain stronger EM and SubEM across long contexts.
}
    \label{fig:chunk_size}
\end{figure}

\textbf{Chunk size provides a trade-off between utility and inference efficiency.}
As shown in Figure~\ref{fig:chunk_size}, DP-MIND remains relatively stable across a wide range of chunk sizes, indicating that its performance is not overly sensitive to this hyperparameter.
For utility, chunk sizes of 5K--10K generally achieve competitive EM and SubEM across different context lengths, while an excessively large chunk size of 20K tends to degrade performance in the ultra-long regime.
In contrast, runtime is more sensitive to chunk size: larger chunks reduce the number of recurrent inference steps and therefore substantially lower the time per sample, with the efficiency gap becoming increasingly pronounced as the context length grows.
Overall, a moderate chunk size provides a favorable balance between retrieval utility and inference efficiency.

\begin{wraptable}{r}{0.44\textwidth}
\vspace{-0.3em}
\centering
\scriptsize

\definecolor{gain}{RGB}{30,130,70}

\caption{
Robustness of DP-MIND under different DP perturbation mechanisms.
}
\label{tab:perturbation_robustness}

\setlength{\tabcolsep}{2.8pt}
\renewcommand{\arraystretch}{1.12}

\newcommand{\gain}[1]{\textcolor{gain}{\tiny(+ #1)}}

\begin{tabular}{@{}l c c c@{}}
\toprule
\textbf{Perturbation}
&
\textbf{Length}
&
\textbf{Qwen-1M}
&
\textbf{DP-MIND}
\\
\midrule

\multirow{4}{*}{
\makecell[l]{
$d_{\chi}$-privacy\\
\citep{feyisetan2020privacy}
}}
& 56K  & 45.31 & 73.44 \gain{28.13} \\
& 112K & 35.94 & 68.56 \gain{32.62} \\
& 224K & 26.56 & 70.31 \gain{43.75} \\
& 448K & 9.38  & 71.88 \gain{62.50} \\
\midrule

\multirow{4}{*}{
\makecell[l]{
SanText\\
\citep{yue2021differential}
}}
& 56K  & 52.34 & 82.03 \gain{29.69} \\
& 112K & 50.78 & 78.91 \gain{28.13} \\
& 224K & 37.50 & 78.12 \gain{40.62} \\
& 448K & 6.25  & 75.00 \gain{68.75} \\
\midrule

\multirow{4}{*}{
\makecell[l]{
InferDPT\\
\citep{tong2025inferdpt}
}}
& 56K  & 57.81 & 80.47 \gain{22.66} \\
& 112K & 50.78 & 78.91 \gain{28.13} \\
& 224K & 34.38 & 78.12 \gain{43.74} \\
& 448K & 7.81  & 71.88 \gain{64.07} \\

\bottomrule
\end{tabular}

\vspace{-1em}
\end{wraptable}

\textbf{Memory propagation, DP-aware RL, and local denoising progressively improve long-context utility.}
Table~\ref{tab:ablation_length} shows that Chunk-only inference degrades rapidly as the context grows, while introducing memory propagation substantially improves long-context robustness.
Training directly on DP-perturbed inputs provides further gains, and local answer denoising enables DP-MIND to achieve the best overall performance.
We further evaluate DP-MIND on the larger Qwen2.5-14B backbone.
As shown in Figure~\ref{fig:ruler_hqa_14b}, DP-MIND consistently performs best, with the advantage of memory-based inference becoming more pronounced at longer context lengths.

\textbf{DP-MIND remains effective across different DP perturbation mechanisms.}

As shown in Table~\ref{tab:perturbation_robustness}, to evaluate its robustness to different perturbation mechanism, we further adopt the perturbation mechanisms from SanText~\citep{yue2021differential} and InferDPT~\citep{tong2025inferdpt}.
Across all three settings and all evaluated context lengths, DP-MIND consistently outperforms Qwen-1M.
The gains become particularly pronounced at 448K, reaching 62.50, 68.75, and 64.07 SubEM points under the three perturbation mechanisms, respectively.
These results demonstrate that DP-MIND remains effective across different DP perturbation mechanism.

\section{Conclusion}

In this work, we study differentially private long-context inference at the million-token scale and propose \textbf{DP-MIND}, a memory-guided framework that sequentially processes DP-perturbed chunks while propagating compact memory states across the context.
DP-MIND preserves the privacy guarantee of the underlying perturbation mechanism and improves utility through DP-aware reinforcement learning for robust memory construction and trusted local denoising.
Experiments on RULER-HQA and NIAH across different privacy budgets, perturbation mechanisms, and model scales show that DP-MIND maintains strong utility in the million-token regime, demonstrating the effectiveness of memory-guided inference for scalable private LLM inference.

\clearpage

\bibliographystyle{iclr2027_conference}
\bibliography{bib} 


\appendix

\section{Implementation Details}
\label{app:implementation}

\subsection{DP Perturbation}
\label{app:perturbation}

Before sending the context to the external LLM, DP-MIND locally applies
a token-level DP perturbation mechanism to protect the original context.
Specifically, each token is mapped to its embedding representation,
perturbed with calibrated noise, and then projected back to the nearest
token in the vocabulary.
The resulting DP-perturbed context is subsequently partitioned into
sequential chunks and used as the input to the memory-guided inference
process.
Algorithm~\ref{alg:dp_noise_injection} summarizes the complete procedure.

\begin{algorithm}[h]
\caption{Long-Context DP Perturbation Mechanism $\mathcal{M}_{\mathrm{DP}}$}
\label{alg:dp_noise_injection}
\begin{algorithmic}[1]
\REQUIRE Long context $C=(w_1,\ldots,w_L)$, number of chunks $N$,
embedding map $\phi:\mathcal{W}\rightarrow\mathbb{R}^d$,
vocabulary $\mathcal{W}$, privacy parameter $\varepsilon>0$
\ENSURE DP-perturbed chunks $(C_1',\ldots,C_N')$

\FOR{$t=1$ to $L$}
    \STATE Compute embedding $\phi_t\leftarrow\phi(w_t)$.
    \STATE Sample noise $Z_t$ with density
    $p_Z(z)\propto \exp(-\varepsilon\|z\|_2)$.
    \STATE Perturb embedding
    $\widehat{\phi}_t\leftarrow \phi_t+Z_t$.
    \STATE Project back to token
    $w_t'\leftarrow
    \operatorname*{argmin}_{u\in\mathcal{W}}
    \|\phi(u)-\widehat{\phi}_t\|_2$.
\ENDFOR

\STATE Form the perturbed context
$C'=(w_1',\ldots,w_L')$
and split it into $N$ chunks
$(C_1',\ldots,C_N')$.

\RETURN $(C_1',\ldots,C_N')$
\end{algorithmic}
\end{algorithm}

\subsection{Memory-based Inference Details}
\label{app:memory_inference}

For the chunk-based recurrent baseline, we first tokenize each input
context using the Qwen2.5-7B-Instruct tokenizer and partition it into
sequential, non-overlapping chunks of 5,000 tokens, with the last chunk
allowed to contain fewer tokens.
The chunks are processed in their original order.
For each chunk, the model receives the question, the current chunk, and
the memory accumulated from previous chunks, and updates the memory while
preserving question-relevant information.
The memory is initialized as ``No previous memory,'' and each memory
update is constrained to at most 1,024 generated tokens.
After all chunks have been processed, the final answer is generated using
the question and the final accumulated memory.
We use a temperature of 0.7 and top-$p$ of 0.95, and evaluate 128 examples
under each experimental setting.

\subsection{Prompts}
\label{app:prompts}

We use fixed prompt templates for memory updating and final answer generation across all experiments.
The memory update prompt incrementally integrates task-relevant information from each chunk, while the final answer prompt generates the prediction using only the accumulated differentially private memory.
The complete templates are provided below.

\begin{tcolorbox}[
    colback=gray!4,
    colframe=black,
    title={\textbf{Memory Update Prompt}},
    arc=2mm,
    boxrule=0.7pt,
    breakable
]

You are given a problem, a section of an article that may contain
information relevant to the problem, and a previous memory.
Read the section carefully and update the memory with any new information
that helps answer the problem.
Preserve all relevant details from the previous memory while adding useful
information from the current section.

\medskip
\texttt{<problem>}\\
\texttt{\{question\}}\\
\texttt{</problem>}

\medskip
\texttt{<memory>}\\
\texttt{\{previous\_memory\}}\\
\texttt{</memory>}

\medskip
\texttt{<section>}\\
\texttt{\{current\_chunk\}}\\
\texttt{</section>}

\medskip
\textbf{Updated memory:}

\end{tcolorbox}


\begin{tcolorbox}[
    colback=gray!4,
    colframe=black,
    title={\textbf{Final Answer Prompt}},
    arc=2mm,
    boxrule=0.7pt,
    breakable
]

You are given a problem and a differentially private memory.
The memory may contain noise or omit some details due to privacy-preserving
perturbation.
Answer the problem using only the memory, without attempting to reconstruct
any original sensitive context.

\medskip
\textbf{Output format:}\\
\texttt{"Therefore, the answer is (insert answer here)".}

\medskip
\texttt{<problem>}\\
\texttt{\{question\}}\\
\texttt{</problem>}

\medskip
\texttt{<memory>}\\
\texttt{\{final\_memory\}}\\
\texttt{</memory>}

\medskip
\textbf{Your answer:}

\end{tcolorbox}

\section{Evaluation Details}
\label{app:evaluation}

\subsection{Benchmarks}
\label{app:benchmarks}

\paragraph{RULER-HQA.}
RULER-HQA is constructed from HotpotQA~\citep{yang2018hotpotqa}
following the RULER-style synthetic construction~\citep{hsieh2024ruler}.
Each example contains supporting evidence mixed with a large number of
distractor documents, enabling controlled evaluation of long-context
extrapolation.
We synthesize 128 examples for each context scale.
The evaluated contexts range from approximately \textbf{7K} to
\textbf{1.8M} tokens.

\paragraph{NIAH.}
NIAH evaluates whether the model can retrieve a short target span from a
long haystack.
We use three variants from RULER~\citep{hsieh2024ruler}: repetitive
haystack with a numeric needle, essay haystack with a numeric needle,
and essay haystack with a UUID needle.
We evaluate 128 examples at each context length, ranging from
\textbf{32K} to \textbf{512K} tokens.

\subsection{Evaluation Metrics}
\label{app:metrics}

We normalize both predictions and gold answers by lowercasing, removing
punctuation and articles, and fixing whitespace.
EM measures exact string match after normalization.
F1 measures token-level overlap between the prediction and the gold
answer.
SubEM counts a prediction as correct if it contains the complete
normalized gold answer span.
When multiple gold answers are available, we report the maximum score
over all gold answers.
For DP-MIND, local span restoration is applied before evaluation on
answer-based tasks, and EM, F1, and SubEM are computed on the restored
answer.

\section{Additional Experiment Results}

This section provides additional experimental results on RULER-HQA to further
support the effectiveness of DP-MIND. We first present an ablation study under
$\epsilon=7$ to isolate the contribution of each component, including
DP-perturbed training and local answer denoising. We then report complete
comparisons across multiple privacy budgets and context lengths using F1, Exact
Match (EM), and Sub-Exact Match (SubEM). Overall, the results show that
DP-MIND consistently improves over sparse-attention, long-context scaling, and
memory-augmented baselines, with especially clear gains in ultra-long contexts.

\begin{table*}[h]
\definecolor{gain}{RGB}{30,130,70}
\definecolor{nogain}{RGB}{130,130,130}

\centering
\vspace{1em}
\caption{
Ablation study of DP-MIND on RULER-HQA under $\varepsilon=7$, measured by SubEM. 
Chunk-only splits the context into chunks and queries them without memory propagation. 
RL MemoryAgent denotes the original memory-based inference method~\citep{yu2026memagent}. 
DP-RL MemoryAgent is trained on DP-perturbed data. 
DP-MIND is our full method with local answer denoising. 
Numbers in parentheses denote improvements over RL MemoryAgent.
}
\label{tab:ablation_length}

\setlength{\tabcolsep}{4.6pt}
\renewcommand{\arraystretch}{1.18}

\newcommand{\up}[1]{\textcolor{gain}{\scriptsize(+#1)}}
\newcommand{\eqgain}{\textcolor{nogain}{\scriptsize(+0.00)}}

\begin{tabular}{@{}lcccc@{}}
\toprule
Length
& \makecell[c]{Chunk-only}
& \makecell[c]{RL MemoryAgent}
& \makecell[c]{DP-RL MemoryAgent}
& \makecell[c]{DP-MIND} \\
\midrule

7K   
& 58.59 
& 72.66 
& 75.00 \up{2.34}
& \textbf{75.78} \up{3.12} \\

14K  
& 41.41 
& 67.97 
& 71.88 \up{3.91}
& \textbf{74.22} \up{6.25} \\

28K  
& 44.53 
& 67.19 
& 69.53 \up{2.34}
& \textbf{71.09} \up{3.90} \\

56K  
& 30.47 
& 67.97 
& 67.97 \eqgain
& \textbf{73.44} \up{5.47} \\

112K 
& 30.47 
& 65.62 
& 66.43 \up{0.81}
& \textbf{68.56} \up{2.13} \\

224K 
& 17.97 
& 69.53 
& 69.53 \eqgain
& \textbf{70.31} \up{0.78} \\

448K 
& 8.59 
& 67.19 
& 71.88 \up{4.69}
& \textbf{71.88} \up{4.69} \\

896K 
& 6.25 
& 60.16 
& 62.59 \up{2.43}
& \textbf{64.81} \up{2.22} \\

1.8M 
& 0.00 
& 56.25 
& 57.81 \up{1.56}
& \textbf{59.38} \up{3.13} \\

\bottomrule
\vspace{1em
}
\end{tabular}
\end{table*}

\begin{table*}[h]
\centering
\scriptsize
\caption{
F1 comparison on RULER-HQA under different privacy budgets and
context lengths. All values are percentages.
}
\label{tab:f1_multi_eps}

\setlength{\tabcolsep}{5.2pt}
\renewcommand{\arraystretch}{1.18}
\definecolor{bestblue}{RGB}{20,82,165}
\definecolor{bestbg}{RGB}{238,245,255}
\definecolor{missgray}{RGB}{145,145,145}

\newcommand{\ours}[1]{\textcolor{bestblue}{#1}}
\newcommand{\oursbest}[1]{\textcolor{bestblue}{\textbf{#1}}}
\newcommand{\best}[1]{\textbf{#1}}
\newcommand{\NA}{\textcolor{missgray}{--}}
\newcommand{\TODO}{\textcolor{missgray}{TODO}}

\begin{tabular}{@{}c l ccccccccc@{}}
\toprule
\textbf{Privacy Budget}
& \textbf{Method}
& \textbf{7K}
& \textbf{14K}
& \textbf{28K}
& \textbf{56K}
& \textbf{112K}
& \textbf{224K}
& \textbf{448K}
& \textbf{896K}
& \textbf{1.8M} \\
\midrule

\multirow{5}{*}{$\epsilon=6$}
& Qwen2.5-7B (baseline)
& 36.12 & 37.84 & 27.98 & \NA & \NA & \NA & \NA & \NA & \NA \\

& Sparse Attention
& 36.12 & 37.84 & 27.98 & 4.81 & 2.56 & 0.00
& \NA & \NA & \NA \\

& Long-Context Scaling
& 43.94 & 42.29 & 36.80 & 34.10 & 27.23 & 21.69 & 12.42
& 0.00 & 0.00 \\

& Memory-Augmented
& 39.98 & 33.56 & 34.38 & 27.88 & 27.52 & 22.88 & 26.51
& 20.09 & 20.94 \\

\rowcolor{bestbg}
& \ours{\textbf{DP-MIND}}
& \oursbest{53.30}
& \oursbest{54.93}
& \oursbest{53.57}
& \oursbest{48.11}
& \oursbest{48.36}
& \oursbest{48.23}
& \oursbest{52.43}
& \oursbest{47.62}
& \oursbest{42.41} \\

\addlinespace[2pt]
\midrule

\multirow{5}{*}{$\epsilon=7$}
& Qwen2.5-7B (baseline)
& 41.70 & 45.15 & 40.90 & \NA & \NA & \NA & \NA & \NA & \NA \\

& Sparse Attention
& 41.70 & 45.15 & 40.90 & 5.27 & 3.18 & \NA
& \NA & \NA & \NA \\

& Long-Context Scaling
& 50.38 & 51.08 & 42.78 & 45.42 & 35.97 & 28.17 & 8.88
& 4.69 & 0.00 \\

& Memory-Augmented
& 51.76 & 43.30 & 40.55 & 39.46 & 29.64 & 31.73 & 31.77
& 30.74 & 21.69 \\

\rowcolor{bestbg}
& \ours{\textbf{DP-MIND}}
& \oursbest{71.49}
& \oursbest{71.92}
& \oursbest{69.46}
& \oursbest{66.16}
& \oursbest{63.62}
& \oursbest{71.28}
& \oursbest{68.68}
& \oursbest{58.11}
& \oursbest{58.11} \\

\addlinespace[2pt]
\midrule

\multirow{5}{*}{$\epsilon=8$}
& Qwen2.5-7B (baseline)
& 43.19 & 40.90 & 44.53 & \NA & \NA & \NA & \NA & \NA & \NA \\

& Sparse Attention
& 43.19 & 40.90 & 44.53 & 4.25 & 3.01 & 0.48
& \NA & \NA & \NA \\

& Long-Context Scaling
& 55.77 & 53.23 & 51.14 & 54.53 & 44.60 & 37.32 & 10.95
& 8.88 & 4.81 \\

& Memory-Augmented
& 53.17 & 41.29 & 45.09 & 38.85 & 37.47 & 40.06 & 30.74
& 34.14 & 27.73 \\

\rowcolor{bestbg}
& \ours{\textbf{DP-MIND}}
& \oursbest{77.26}
& \oursbest{72.64}
& \oursbest{74.21}
& \oursbest{74.45}
& \oursbest{69.78}
& \oursbest{72.48}
& \oursbest{69.86}
& \oursbest{65.92}
& \oursbest{62.35} \\

\bottomrule
\end{tabular}
\end{table*}

\begin{table*}[h]
\centering
\scriptsize
\caption{
Exact Match (EM) comparison on RULER-HQA under different privacy
budgets and context lengths. All values are percentages.
}
\label{tab:em_multi_eps}

\setlength{\tabcolsep}{5.2pt}
\renewcommand{\arraystretch}{1.18}
\definecolor{bestblue}{RGB}{20,82,165}
\definecolor{bestbg}{RGB}{238,245,255}
\definecolor{missgray}{RGB}{145,145,145}

\newcommand{\ours}[1]{\textcolor{bestblue}{#1}}
\newcommand{\oursbest}[1]{\textcolor{bestblue}{\textbf{#1}}}
\newcommand{\best}[1]{\textbf{#1}}
\newcommand{\NA}{\textcolor{missgray}{--}}
\newcommand{\TODO}{\textcolor{missgray}{TODO}}

\begin{tabular}{@{}c l ccccccccc@{}}
\toprule
\textbf{Privacy Budget}
& \textbf{Method}
& \textbf{7K}
& \textbf{14K}
& \textbf{28K}
& \textbf{56K}
& \textbf{112K}
& \textbf{224K}
& \textbf{448K}
& \textbf{896K}
& \textbf{1.8M} \\
\midrule

\multirow{5}{*}{$\epsilon=6$}
& Qwen2.5-7B (baseline)
& 21.09 & 22.66 & 16.41 & \NA & \NA & \NA & \NA & \NA & \NA \\

& Sparse Attention
& 21.09 & 22.66 & 16.41 & 0.78 & 0.00 & 0.00
& \NA & \NA & \NA \\

& Long-Context Scaling
& 26.56 & 25.00 & 21.88 & 17.97 & 14.84 & 14.06 & 7.03
& 2.56 & 0.00 \\

& Memory-Augmented
& 24.22 & 19.53 & 19.53 & 10.16 & 14.06 & 10.94 & 12.50
& 8.59 & 10.16 \\

\rowcolor{bestbg}
& \ours{\textbf{DP-MIND}}
& \oursbest{33.59}
& \oursbest{39.84}
& \oursbest{35.94}
& \oursbest{32.03}
& \oursbest{28.91}
& \oursbest{31.25}
& \oursbest{36.72}
& \oursbest{32.81}
& \oursbest{28.12} \\

\addlinespace[2pt]
\midrule

\multirow{5}{*}{$\epsilon=7$}
& Qwen2.5-7B (baseline)
& 24.22 & 29.69 & 29.69 & \NA & \NA & \NA & \NA & \NA & \NA \\

& Sparse Attention
& 24.22 & 29.69 & 29.69 & 0.78 & 0.00 & \NA
& \NA & \NA & \NA \\

& Long-Context Scaling
& 35.16 & 32.03 & 28.91 & 31.25 & 21.88 & 16.41 & 4.69
& 4.81 & 0.00 \\

& Memory-Augmented
& 35.94 & 28.91 & 25.78 & 23.44 & 15.62 & 19.53 & 20.31
& 12.50 & 14.84 \\

\rowcolor{bestbg}
& \ours{\textbf{DP-MIND}}
& \oursbest{55.47}
& \oursbest{55.47}
& \oursbest{54.69}
& \oursbest{47.66}
& \oursbest{48.44}
& \oursbest{57.81}
& \oursbest{50.00}
& \oursbest{44.53}
& \oursbest{45.31} \\

\addlinespace[2pt]
\midrule

\multirow{5}{*}{$\epsilon=8$}
& Qwen2.5-7B (baseline)
& 28.12 & 26.56 & 32.81 & \NA & \NA & \NA & \NA & \NA & \NA \\

& Sparse Attention
& 28.12 & 26.56 & 32.81 & 0.00 & 0.00 & 0.00
& \NA & \NA & \NA \\

& Long-Context Scaling
& 37.50 & 35.16 & 35.16 & 39.84 & 29.69 & 24.22 & 15.47
& 16.41 & 14.06 \\

& Memory-Augmented
& 35.94 & 25.00 & 28.91 & 23.44 & 20.31 & 25.78 & 19.53
& 18.75 & 17.19 \\

\rowcolor{bestbg}
& \ours{\textbf{DP-MIND}}
& \oursbest{60.94}
& \oursbest{55.47}
& \oursbest{60.16}
& \oursbest{58.59}
& \oursbest{56.25}
& \oursbest{60.16}
& \oursbest{53.12}
& \oursbest{51.56}
& \oursbest{50.00} \\

\bottomrule
\end{tabular}
\end{table*}

\begin{table*}[h]
\centering
\scriptsize
\caption{
F1 comparison on RULER-HQA for Qwen2.5-14B series under $\epsilon=7$.
}
\label{tab:qwen25_14b_f1_eps7}

\setlength{\tabcolsep}{5.2pt}
\renewcommand{\arraystretch}{1.18}
\definecolor{bestblue}{RGB}{20,82,165}
\definecolor{bestbg}{RGB}{238,245,255}
\definecolor{missgray}{RGB}{145,145,145}

\newcommand{\ours}[1]{\textcolor{bestblue}{#1}}
\newcommand{\oursbest}[1]{\textcolor{bestblue}{\textbf{#1}}}
\newcommand{\NA}{\textcolor{missgray}{--}}

\begin{tabular}{@{}l ccccccccc@{}}
\toprule
\textbf{Method}
& \textbf{7K}
& \textbf{14K}
& \textbf{28K}
& \textbf{56K}
& \textbf{112K}
& \textbf{224K}
& \textbf{448K}
& \textbf{896K}
& \textbf{1.8M} \\
\midrule

Qwen2.5-14B (baseline)
& 23.31 & 22.77 & 19.64 & \NA & \NA & \NA & \NA & \NA & \NA \\

Sparse Attention
& 23.31$^{\dagger}$ & 22.77$^{\dagger}$ & 19.64$^{\dagger}$
& 3.57 & 3.64 & 0.06 & \NA & \NA & \NA \\

Long-Context Scaling
& 48.52 & 37.13 & 34.96 & 30.40 & 27.23 & 21.21 & 15.87 & 10.25 & \NA \\

Memory-Augmented
& 53.71 & 55.05 & 53.40 & 40.97 & 44.64 & 44.27 & 36.40 & 36.35 & 32.80 \\

\rowcolor{bestbg}
\ours{\textbf{DP-MIND}}
& \oursbest{75.77}
& \oursbest{72.97}
& \oursbest{67.56}
& \oursbest{71.11}
& \oursbest{71.43}
& \oursbest{72.69}
& \oursbest{66.00}
& \oursbest{65.13}
& \oursbest{62.23} \\

\bottomrule
\end{tabular}
\end{table*}

\begin{table*}[h]
\centering
\scriptsize
\caption{
Exact Match (EM) comparison on RULER-HQA for Qwen2.5-14B series under $\epsilon=7$.
}
\label{tab:qwen25_14b_em_eps7}

\setlength{\tabcolsep}{5.2pt}
\renewcommand{\arraystretch}{1.18}
\definecolor{bestblue}{RGB}{20,82,165}
\definecolor{bestbg}{RGB}{238,245,255}
\definecolor{missgray}{RGB}{145,145,145}

\newcommand{\ours}[1]{\textcolor{bestblue}{#1}}
\newcommand{\oursbest}[1]{\textcolor{bestblue}{\textbf{#1}}}
\newcommand{\NA}{\textcolor{missgray}{--}}

\begin{tabular}{@{}l ccccccccc@{}}
\toprule
\textbf{Method}
& \textbf{7K}
& \textbf{14K}
& \textbf{28K}
& \textbf{56K}
& \textbf{112K}
& \textbf{224K}
& \textbf{448K}
& \textbf{896K}
& \textbf{1.8M} \\
\midrule

Qwen2.5-14B (baseline)
& 11.72 & 9.38 & 8.59 & \NA & \NA & \NA & \NA & \NA & \NA \\

Sparse Attention
& 11.72 & 9.38 & 8.59 & 3.12 & 3.91 & 0.00 & \NA & \NA & \NA \\

Long-Context Scaling
& 32.81 & 21.09 & 24.22 & 18.75 & 17.19 & 12.50 & 6.25 & 2.34 & \NA \\

Memory-Augmented
& 37.50 & 36.72 & 34.38 & 25.78 & 28.91 & 31.25 & 22.66 & 23.44 & 19.53 \\

\rowcolor{bestbg}
\ours{\textbf{DP-MIND}}
& \oursbest{58.59}
& \oursbest{53.12}
& \oursbest{48.44}
& \oursbest{50.78}
& \oursbest{51.56}
& \oursbest{54.69}
& \oursbest{49.22}
& \oursbest{48.44}
& \oursbest{46.40} \\

\bottomrule
\end{tabular}
\end{table*}

\begin{table*}[h]
\centering
\scriptsize
\caption{
Sub-Exact Match (Sub-EM) comparison on RULER-HQA for Qwen2.5-14B series under $\epsilon=7$.
}
\label{tab:qwen25_14b_subem_eps7}

\setlength{\tabcolsep}{5.2pt}
\renewcommand{\arraystretch}{1.18}
\definecolor{bestblue}{RGB}{20,82,165}
\definecolor{bestbg}{RGB}{238,245,255}
\definecolor{missgray}{RGB}{145,145,145}

\newcommand{\ours}[1]{\textcolor{bestblue}{#1}}
\newcommand{\oursbest}[1]{\textcolor{bestblue}{\textbf{#1}}}
\newcommand{\NA}{\textcolor{missgray}{--}}

\begin{tabular}{@{}l ccccccccc@{}}
\toprule
\textbf{Method}
& \textbf{7K}
& \textbf{14K}
& \textbf{28K}
& \textbf{56K}
& \textbf{112K}
& \textbf{224K}
& \textbf{448K}
& \textbf{896K}
& \textbf{1.8M} \\
\midrule

Qwen2.5-14B (baseline)
& 60.16 & 56.25 & 53.12 & \NA & \NA & \NA & \NA & \NA & \NA \\

Sparse Attention
& 60.16 & 56.25 & 53.12 & 3.91 & 1.56 & 0.00 & \NA & \NA & \NA \\

Long-Context Scaling
& 65.62 & 56.25 & 59.38 & 55.47 & 49.22 & 37.50 & 17.19 & 7.03 & \NA \\

Memory-Augmented
& 56.25 & 60.16 & 55.47 & 44.53 & 53.12 & 46.09 & 36.72 & 39.84 & 33.59 \\

\rowcolor{bestbg}
\ours{\textbf{DP-MIND}}
& \oursbest{81.31}
& \oursbest{78.91}
& \oursbest{74.22}
& \oursbest{80.47}
& \oursbest{78.91}
& \oursbest{80.47}
& \oursbest{77.34}
& \oursbest{74.22}
& \oursbest{70.31} \\

\bottomrule
\end{tabular}
\end{table*}

\begin{table*}[h]
\definecolor{oursbg}{RGB}{238,245,255}
\definecolor{missgray}{RGB}{145,145,145}

\centering
\scriptsize
\caption{
NIAH results under different privacy budgets and context lengths.
Results are reported on Single, Multi-Key (M-Key), and Multi-Query (M-Query) settings, where $\uparrow$ indicates higher is better.
}
\label{tab:niah_results}

\setlength{\tabcolsep}{2.4pt}
\renewcommand{\arraystretch}{1.15}

\newcommand{\niahNA}{\textcolor{missgray}{--}}

\begin{tabular}{@{}cl*{9}{c}@{}}
\toprule
\multirow{2}{*}{\textbf{Privacy Budget}}
& \multirow{2}{*}{\textbf{Method}}
& \multicolumn{3}{c}{\textbf{32K}}
& \multicolumn{3}{c}{\textbf{128K}}
& \multicolumn{3}{c}{\textbf{512K}} \\
\cmidrule(lr){3-5}
\cmidrule(lr){6-8}
\cmidrule(lr){9-11}

&
& \textbf{Single$\uparrow$}
& \textbf{M-Key$\uparrow$}
& \textbf{M-Query$\uparrow$}
& \textbf{Single$\uparrow$}
& \textbf{M-Key$\uparrow$}
& \textbf{M-Query$\uparrow$}
& \textbf{Single$\uparrow$}
& \textbf{M-Key$\uparrow$}
& \textbf{M-Query$\uparrow$} \\
\midrule

\multirow{3}{*}{$\epsilon=6$}
& Qwen2.5-7B
& 3.91 & 2.34 & 2.34
& \niahNA & \niahNA & \niahNA
& \niahNA & \niahNA & \niahNA \\

& Qwen2.5-7B-1M
& 3.12 & 3.91 & 3.12
& 3.91 & 5.47 & 3.12
& 0.00 & 0.00 & 0.39 \\

& \textbf{DP-MIND}
& \textbf{5.47} & \textbf{7.03} & \textbf{3.32}
& 4.69 & 5.47 & 4.10
& 5.47 & 3.12 & 3.32 \\
\midrule

\multirow{3}{*}{$\epsilon=7$}
& Qwen2.5-7B
& 31.25 & 37.50 & 31.64
& \niahNA & \niahNA & \niahNA
& \niahNA & \niahNA & \niahNA \\

& Qwen2.5-7B-1M
& 36.72 & 39.06 & 36.72
& 32.03 & 33.59 & 16.80
& 19.53 & 3.91 & 2.93 \\

& \textbf{DP-MIND}
& \textbf{42.19} & \textbf{42.19} & \textbf{38.09}
& 36.72 & 38.28 & 33.40
& 36.72 & 35.16 & 38.09 \\
\midrule

\multirow{3}{*}{$\epsilon=8$}
& Qwen2.5-7B
& 71.09 & 74.22 & 75.78
& \niahNA & \niahNA & \niahNA
& \niahNA & \niahNA & \niahNA \\

& Qwen2.5-7B-1M
& 80.47 & 77.34 & \textbf{80.66}
& 76.56 & 71.88 & 23.83
& 64.84 & 21.88 & 19.34 \\

& \textbf{DP-MIND}
& \textbf{82.03} & \textbf{78.91} & 79.30
& 78.91 & 75.00 & 76.56
& 77.34 & 75.00 & 74.02 \\

\bottomrule
\end{tabular}
\end{table*}

\end{document}